\documentclass{article}
\usepackage{pgfplotstable}
\usepackage{booktabs}  

\usepackage{siunitx}       

\usepackage{spconf,amsmath,amssymb,amsthm,graphicx}
\usepackage{algorithm}
\usepackage{algpseudocode}
\algrenewcommand\algorithmicrequire{\textbf{Input:}}
\algrenewcommand\algorithmicensure{\textbf{Output:}}
\usepackage{booktabs}
\usepackage{pgfplots,arydshln}
\pgfplotsset{compat=1.18}     % or 1.17+
\usepgfplotslibrary{fillbetween}
\usetikzlibrary{intersections}
% \ninept
% \usepgfplotslibrary{fillbetween}
\usepackage{changepage}

\usepackage{url}
% ---- in preamble ----
\usepackage{subcaption} 

\usepackage{soul}         
\usepackage{arydshln}

\usepackage{amsmath,amssymb,amsfonts}
\usepackage{algorithm}      
\usepackage{algpseudocode}  
\usepackage{xcolor}

\usepackage[normalem]{ulem}

\input{mysymbol.sty}

\title{BUILD with Precision: Bottom-Up Inference of Linear DAGs}
% Would have liked to add ...DAGs from Inverse Covariances but too long. Maybe for the journal

\name{Hamed Ajorlou$^{1}$, Samuel Rey$^{2}$, Gonzalo Mateos$^{1}$, Geert Leus$^{3}$, and Antonio G. Marques$^{2}$\thanks{Work in this paper was supported by the NSF award ECCS-2231036, the Spanish AEI (10.13039/501100011033) grant PID2022-136887NB-I00, and the Community of Madrid via the Ellis Madrid Unit and grants URJC/CAM F1180 and TEC-2024/COM-89.}}

\address{
$^{1}$ University of Rochester, Rochester, NY, USA \\
$^{2}$ Universidad Rey Juan Carlos, Madrid, Spain \\
$^{3}$ Delft University of Technology, Delft, The Netherlands
}

\begin{document}
\ninept
\maketitle
\begin{abstract}
Learning the structure of directed acyclic graphs (DAGs) from observational data is a central problem in causal discovery, statistical signal processing, and machine learning. Under a linear Gaussian structural equation model (SEM) with equal noise variances, the problem is identifiable and we show that the ensemble precision matrix of the observations exhibits a distinctive structure that facilitates DAG recovery. Exploiting this property, we propose BUILD (Bottom-Up Inference of Linear DAGs), a deterministic stepwise algorithm that identifies leaf nodes and their parents, then prunes the leaves by removing incident edges to proceed to the next step, exactly reconstructing the DAG from the true precision matrix. In practice, precision matrices must be estimated from finite data, and ill-conditioning may lead to error accumulation across BUILD steps. As a mitigation strategy, we periodically re-estimate the precision matrix (with less variables as leaves are pruned), trading off runtime for enhanced robustness. Reproducible results on challenging synthetic benchmarks demonstrate that BUILD compares favorably to state-of-the-art DAG learning algorithms, while offering an explicit handle on complexity.
\end{abstract}
\begin{keywords}
DAG structure learning; graphical model; precision matrix; topology inference; causal discovery
\end{keywords}
\section{Introduction}
\label{sec:intro}

Recovering the structure of directed acyclic graphs (DAGs) from observational data is a fundamental problem in signal processing, statistics, and machine learning, with applications in causal discovery and graphical modeling~\cite{ortega2018,peters2017,xia2021}. A common modeling framework is the linear structural equation model (SEM), where each observed variable depends linearly on its parents in the latent DAG together with exogenous noise~\cite{shimizu2006,peters2017}. This paper addresses the structure identification problem in the specific setting of linear Gaussian SEMs with known, equal error variances. In this setting, the DAG is known to be identifiable from observational data alone~\cite{peters2014}, motivating the development of several computational methods~\cite{chen2019,gao2020,ghoshal2017a,ghoshal2017b}.\vspace{2pt}

\noindent \textbf{Related work.} The problem of learning DAGs has a long history; see e.g.,~\cite[Ch. 7.2]{peters2014} for a thorough treatment. Early approaches relied on purely discrete or combinatorial search methods in the space of DAGs. Noteworthy representatives include the Greedy Equivalence Search (GES)~\cite{chickering2002,ramsey2017} and constraint-based methods such as the PC algorithm~\cite{spirtes2000,buhlmann2014cam}, though these approaches scale poorly due to the super-exponential growth (on the number of nodes) of the search space~\cite{chickering1996,chickering2004}. 
A recent line of work develops continuous relaxations of score-based methods, where acyclicity is enforced through smooth functions amenable to gradient-based optimization. NOTEARS pioneered the trace-exponential acyclicity characterization~\cite{zheng2018notears}, with later refinements via polynomial and log-determinant functions whose zeroth level set is the space of DAGs~\cite{yu2019,wei2020,bello2022dagma,samu2025icassp}. 
Subsequent contributions, including GOLEM~\cite{ng2020golem}, DAGMA~\cite{bello2022dagma}, and CoLiDE~\cite{saboksayr2024colide}, advanced this direction by optimizing least-squares or likelihood-based objectives augmented by acyclicity penalties, with CoLiDE further incorporating noise variance estimation for adaptivity. 
These methods scale well to large number of variables, but can face optimization challenges as they attempt to solve non-convex, equality-constrained problems~\cite{wei2020}. When edge weights are non-negative, then convex formulations are possible~\cite{samu2025icassp}. Here instead, we deal with general (non-sign constrained) weights and sidestep optimization issues altogether. Order-based methods recover DAGs by first learning a node ordering and then estimating edges. Recent state-of-the-art representatives include~\cite{gao2022optimal,daskalakis2025}, but these algorithms only estimate the DAG structure (neglecting edge weights, unlike ours), albeit with strong theoretical support recovery guarantees.

More germane to our approach in this paper, another way to tackle DAG learning is to restrict the search space to a superstructure, namely an undirected skeleton encompassing a restricted class of possible DAGs. For Gaussian data, the precision (i.e., inverse covariance) matrix of the observations encodes conditional independencies and has well-documented links to the so-termed moralized graph associated to the DAG~\cite{peters2017,peters2014,lauritzen1996graphical}; see also~\cite{loh2014} for an influential work exploring general non-Gaussian settings. %In Gaussian models, zeros in the precision matrix correspond exactly to conditional independencies, making it a natural tool for skeleton estimatio. 
This creates a strong incentive to leverage high-quality precision matrix estimators, such as graphical Lasso~\cite{friedman2008glasso}, as a precursor to DAG structure inference. This is the path we follow here, but cautiously, since we show that for linear SEMs the resulting precision matrix can be severely ill-conditioned. %Among the earliest precision matrix estimators, the Graphical Lasso introduced an $\ell_{1}$-penalized Gaussian log-likelihood framework, solvable efficiently via coordinate descent~\cite{friedman2008glasso}. 
%Loh and Bühlmann~\cite{loh2014} emphasize that conditional independence graphs estimated via methods such as the Graphical Lasso can be used to restrict the DAG search space, yielding tractable algorithms under bounded degree or treewidth. %Several other estimators have been developed for this purpose. The CLIME program estimates the precision matrix through constrained $\ell_{1}$ minimization, while its adaptive variant ACLIME achieves minimax-optimal rates across matrix norms~\cite{cai2011clime,cai2012aclime}. The Meinshausen--Bühlmann (MB) method instead performs nodewise Lasso regressions for neighborhood selection, providing consistency guarantees in high dimensions~\cite{meinshausen2006}. 
Interestingly, GreedyPrune avoids restrictive condition number assumptions and provides fixed polynomial-time guarantees for recovering the precision matrix in certain classes of Gaussian graphical models~\cite{kelner2020greedy}. Motivated by these insights on the role of precision matrices to unveil latent DAG structure, we next present our own approach that builds directly on this valuable connection.\vspace{2pt}

%These works highlight both the opportunities and the challenges of precision-based approaches, particularly in regimes where covariance estimation is ill-conditioned. \blue{[GM: We need to make sure that we convey the following ideas/observations are ours.  Also, what is the problem we solve in this paper and what approach we follow. The transition from related work to the focus of this paper should be better emphasized.]} Under the linear SEM with equal noise variance, the precision matrix admits a distinctive algebraic structure. Leaf nodes manifest as diagonal entries equal to the inverse noise variance, and the corresponding rows encode their parent sets. Exploiting this property makes it possible to design a deterministic stepwise algorithm that identifies leaves, recovers their parents, removes their contributions from the rest of the network, and repeats until the entire DAG is reconstructed. If the true precision matrix were available, this procedure would be exact and would terminate in time linear in the number of nodes.

\noindent \textbf{Contributions.} We propose BUILD (Bottom-Up Inference of Linear DAGs), %a deterministic recovery algorithm that leverages the precision matrix to guide a bottom-up reconstruction of the graph. 
a deterministic algorithm that iteratively identifies leaf nodes and their parents, then prunes the leaves by removing incident edges to proceed to the next step, exactly reconstructing the DAG from the ensemble precision matrix of a linear Gaussian SEM model with equal noise variances. The algorithm is motivated by the favorable structure we reveal in the population-level precision matrix (see Lemma 1 and Corollary 1). In practice, we only have data to work with and hence an imperfect precision matrix estimate.  A key feature of BUILD is an error mitigation strategy that balances accuracy and runtime by periodically re-estimating the precision matrix, as leaves are pruned and the problem dimensionality decreases. %This mechanism not only mitigates the propagation of estimation errors through the network but also provides a lens to study how such errors accumulate and how refreshing can curb their impact. 
Through extensive synthetic experiments, \textsc{BUILD} demonstrates strong performance across multiple evaluation metrics, achieving accurate edge detection and weight estimation while maintaining competitive runtime. Moreover, its performance scales favorably as the number of nodes increases. To ensure transparency and reproducibility, we publicly share the implementation of BUILD alongside the paper.%\footnote{\url{https://github.com/hamedajorlou/BUILD}}

% \green{-------------Polished up unitl here ---------------}

%Estimation errors inevitably introduce distortions that may propagate across steps, and to reduce this effect we consider a lightweight refresh mechanism that re-estimates the precision matrix partway through the recovery process. This engineering tweak \blue{[GM: Careful with the language you use. Engineering tweak conveys a sense of incrementality.]} provides a flexible tradeoff between accuracy and runtime while preserving the core theoretical insight of the method. Several synthetic experiments included in the paper reaffirm our claims. \blue{[GM: Lazy description of the numerical tests, their scope and objectives, and most importantly the key finding and takeaways.]}

\section{Preliminaries and Problem Statement}
\label{sec:problem}
% Let $\ccalD = (\ccalV, \ccalE)$ denote a directed acyclic graph, where $\ccalV$ is the set of $N$ nodes and $\ccalE \subseteq \ccalV \times \ccalV$ is the set of directed edges. By definition, $\ccalD$ contains no directed cycles and thus admits a topological ordering in which every edge $(i,j)\in\ccalE$ implies that node $j$ precedes node $i$. Equivalently, each DAG induces a partial order on $\ccalV$, where $j < i$ if $j$ is a predecessor of $i$. A signal $\bbx\in\reals^N = [\bbx_{N-1}, \bbx_{N-2} ..., \bbx_0]$ is defined on the DAG which each element is the node value. Weighted adjacency matrix $\bbA\in\reals^{N\times N}$ represents the structure of $\ccalD$, with $\bbA_{ij}\neq 0$ if and only if $j\to i \in \ccalE$. When the rows and columns of $\bbA$ are arranged according to a topological order of the nodes, $\bbA$ takes the form of a upper triangular matrix. We assume that the data are generated according to a linear Gaussian SEM
% %
% \begin{equation}\label{eq:sem_model}
% \bbx = \bbA \bbx + \bbvarepsilon, \quad \bbvarepsilon \sim \mathcal{N}(\bb0, \bbSigma_{\epsilon}),
% \end{equation}
% %
% where $\bbvarepsilon$ is the exogenous noise vector imposed to the network and $\bbSigma_{\epsilon} = \sigma^2 \bbI_N$ is the diagonal noise covariance matrix associated with it. 

We introduce the required background on DAGs and linear SEMs needed to formally state the DAG topology inference problem.\vspace{2pt}

\noindent {\bf Directed acyclic graphs.} Let $\ccalD = (\ccalV, \ccalE)$ denote a DAG, where $\ccalV=\{1,\ldots,N\}$ is the set nodes and $\ccalE \subseteq \ccalV \times \ccalV$ is the set of directed edges. The convention we adopt is that $(i,j)\in\ccalE$ means that there is an arc $j\to i$. Because $\ccalD$ contains no directed cycles or self loops, $\ccalV$ admits a (generally non-unique) topological ordering whereby $(i,j) \in \ccalE$ implies that node $j$ precedes node $i$. Thus, every DAG defines a unique partial order over the set $\ccalV$, where $j < i$ if node $j$ is a \emph{predecessor} of $i$, meaning there exists a directed path from $j$ to $i$. The weighted adjacency matrix $\bbA \in \reals^{N \times N}$ encodes the connectivity structure of $\ccalD$, with $A_{ij} \neq 0$ if and only if $(i,j) \in \ccalE$. When $\ccalV$ is ordered topologically, $\bbA$ is strictly lower triangular. \vspace{2pt}

\noindent {\bf Linear structural equation models.} Let us assume $\ccalD$ captures conditional independencies among the variables in the random vector $\bbx = [ x_1, \ldots, x_N ]^{\top}\in \reals^N$. If the joint distribution $\mathbb{P}(\bbx)$ satisfies a Markov property over $\ccalD$, it implies that each random variable $x_i$ is solely dependent on its parents $\textrm{PA}_i = \{j \in \ccalV : A_{ij} \neq 0\}$~\cite{peters2017}. %\red{AGM: I find this statement problematic. My understanding of SEMS and DAG is that conditional dependence exists among two nodes with the same children, regardless whether they are connected or not in the DAG.} \blue{[GM: the statement is accurate, see e.g., Section 2.2 in Loh and Buhlman, page 3069.]} 
This work focuses on \emph{linear} SEMs to generate such a probability distribution, where the relationship between each random variable and its parents is expressed as $\smash{x_i = \sum_{j\in \textrm{PA}_i}A_{ij}x_{j} + z_i}$, $\forall i\in\ccalV$, where $\bbz = [z_1, \ldots, z_N]^{\top}$ is a zero-mean Gaussian vector of mutually independent, exogenous noises with known variance $\sigma^2$. For a dataset $\bbX\in \reals^{N \times M}$ consisting of $M$ i.i.d. samples drawn from $\mathbb{P}(\bbx)$, the linear SEM can be expressed in matrix form as $\bbX = \bbA\bbX + \bbZ$.
\vspace{2pt}

\noindent {\bf Problem statement.} Given the data matrix $\bbX$ generated by a linear SEM, the goal is to recover the underlying DAG $\ccalD$ by estimating its adjacency matrix $\bbA$. Under the assumption of Gaussian $\bbz$ with equal variances, then $\ccalD$ is identifiable from the joint distribution of $\bbx$ alone (i.e., from observational data); see e.g.,~\cite[Prop. 7.5]{peters2017}. %\red{AGM: Do we use Gaussianity in the precision estimation?} \blue{[GM: no, but in general there are no guarantees you can consistently recover it beyond Gaussian data, or the Ising model.]}

\section{BUILD: Bottom-Up Inference of Linear DAGs}~\label{method}
Here, we present the proposed DAG inference algorithm. For a linear Gaussian SEM with known equal noise variances, we first describe how the adjacency matrix $\bbA$ of $\ccalD$ can be reconstructed from the precision matrix $\bbTheta$ of $\bbx$. We follow a bottom-up approach that sequentially identifies and prunes leaf nodes and incident edges from their parents. We then touch upon important implementation details, namely ill-conditioned precision matrix estimation and mitigation of finite-sample induced errors in the DAG reconstruction process.\vspace{2pt}

\noindent\textbf{DAG recovery from the ensemble precision matrix.} Recall the linear SEM $\bbx=\bbA\bbx +\bbz$, where $\bbz\sim\ccalN(\mathbf{0},\sigma^2\bbI_N)$. The signal covariance matrix is given by $\bbSigma:=\E{\bbx\bbx^\top}=\sigma^2(\bbI_N-\bbA)^{-1}(\bbI_N-\bbA^\top)^{-1}$ and, hence, the precision matrix $\bbTheta=\bbSigma^{-1}$ has the form
\begin{align}
\bbTheta  &= \sigma^{-2} \big(\bbI_N - \bbA^\top\big)\big(\bbI_N - \bbA\big)\nonumber\\
&=\sigma^{-2} \left(\bbI_N-\bbA-\bbA^\top+\bbA^\top\bbA\right).\label{eq:eqexp}
\end{align}
The following simple result characterizes the entries of the precision matrix $\bbTheta$, and will be central to our algorithmic approach.
\begin{lemma}[Precision matrix entries]\label{lemma:theta_entries}
\normalfont  Let $\bbA=[\bba_1,\ldots,\bba_N]\in\reals^{N\times N}$ be the adjacency matrix of $\ccalD$ and write $\textrm{supp}(\bba_j)\equiv \textrm{CH}_j:=\{i \in \ccalV : A_{ij} \neq 0\}$, the  children of $j\in\ccalV$. The (scaled by $\sigma^2$) entries of the (symmetric) precision matrix $\bbTheta$ in \eqref{eq:eqexp}  are given by
\begin{equation}\label{eq:Theta_entries}
\sigma^{2}\Theta_{ij}=\left\{\begin{array}{cc}
1+\sum_{k\in\textrm{CH}_i}A_{ki}^2, & i=j\\
-A_{ij}+\sum_{k\in\textrm{CH}_i \cap \textrm{CH}_j}A_{ki}A_{kj},& i>j
\end{array}\right..
\end{equation}
\end{lemma}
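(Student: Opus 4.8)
The plan is to read off the $(i,j)$ entry of the matrix $\bbI_N - \bbA - \bbA^\top + \bbA^\top\bbA$ from \eqref{eq:eqexp} and then simplify each contribution using the DAG structure. Writing out entries termwise, the identity contributes $\delta_{ij}$, the two linear terms contribute $-A_{ij}-A_{ji}$, and the quadratic term contributes $(\bbA^\top\bbA)_{ij}=\sum_{k=1}^{N}A_{ki}A_{kj}$. Hence, before any simplification,
\begin{equation}
\sigma^{2}\Theta_{ij}=\delta_{ij}-A_{ij}-A_{ji}+\sum_{k=1}^{N}A_{ki}A_{kj}.
\end{equation}

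First I would treat the diagonal $i=j$. Since $\ccalD$ has no self-loops we have $A_{ii}=0$, so both linear terms vanish and $\delta_{ii}=1$. The quadratic term becomes $\sum_{k}A_{ki}^{2}$, and because $A_{ki}\neq 0$ exactly when $k$ is a child of $i$, the summation index set reduces to $\textrm{CH}_i$, yielding $1+\sum_{k\in\textrm{CH}_i}A_{ki}^{2}$ as claimed. For the off-diagonal entries I would invoke the topological ordering of $\ccalV$ noted in Section~\ref{sec:problem}, under which $\bbA$ is strictly lower triangular. For $i>j$ this forces $A_{ji}=0$ (equivalently, a DAG cannot contain both arcs $i\to j$ and $j\to i$), eliminating one linear term and leaving $-A_{ij}$. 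In the quadratic term a factor $A_{ki}A_{kj}$ is nonzero only when $k$ is simultaneously a child of $i$ and of $j$, so the index set collapses to $\textrm{CH}_i\cap\textrm{CH}_j$, giving $-A_{ij}+\sum_{k\in\textrm{CH}_i\cap\textrm{CH}_j}A_{ki}A_{kj}$. The entries with $i<j$ then follow at once from the symmetry of $\bbTheta=\bbSigma^{-1}$.

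The computation is elementary; the only substantive points are the two appeals to acyclicity, namely the absence of self-loops (to clear the diagonal linear terms) and strict lower-triangularity of $\bbA$ in topological order (to discard $A_{ji}$ in the lower triangle). I therefore expect no genuine obstacle beyond correctly identifying the support $\textrm{CH}_i\cap\textrm{CH}_j$ of the quadratic cross term and being careful that the stated formula presupposes the topological labeling that renders $\bbA$ strictly lower triangular.
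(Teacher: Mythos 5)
Your proof is correct and follows essentially the same route as the paper, which simply notes that $\mathrm{diag}(\bbA)=\mathbf{0}_N$ (no self-loops) and $[\bbA^\top\bbA]_{ij}=\bba_i^\top\bba_j$ and leaves the entrywise expansion to the reader. Your added care in observing that the $-A_{ji}$ term vanishes for $i>j$ because the topological labeling makes $\bbA$ strictly lower triangular is a point the paper leaves implicit, and it is exactly the right justification.
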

\begin{proof}
Follows immediately by noticing $\textrm{diag}(\bbA)=\mathbf{0}_{N}$ because a DAG has no self loops, and from $[\bbA^\top \bbA]_{ij}=\bba_i^\top \bba_j$.
\end{proof}
%
%Returning to the information about network topology encoded in the precision matrix, expanding \eqref{eq:eqexp} yields
%\begin{align}
%    \bbTheta_\bbx &= \tfrac{1}{\sigma^2}\big(\bbI - \bbA - \bbA^\top + \bbA^\top \bbA\big). 
%\end{align}
%Since $\sigma$ only rescales the variables without altering the underlying structure, we set $\sigma=1$ for simplicity, obtaining
%\begin{align}
%    \bbTheta_\bbx &= \bbI - \bbA - \bbA^\top + \bbA^\top \bbA.
%\end{align}
%
The support of $\bbTheta$ corresponds to the so-termed moralized graph of $\ccalD$~\cite{loh2014}, an undirected graph obtained by connecting all nodes within each parent set $\textrm{PA}_i$ and also dropping the directionality of all edges in $\ccalE$. %thus does not uniquely determine edge orientations.
Now, if node $i$ is a leaf of $\ccalD$ then $\textrm{CH}_i=\emptyset$ and $\bba_i=\mathbf{0}_N$. This observation along with Lemma \ref{lemma:theta_entries} suggests leaf nodes can be unequivocally identified from the diagonal entries of $\bbTheta$.
\begin{corollary}\label{cor:leafs}
A node $i\in\ccalV$ is a leaf of $\ccalD$ if and only if $\Theta_{ii}=\sigma^{-2}$. All other non-leaf nodes have $\Theta_{jj}>\sigma^{-2}$ and the gap or resolution limit is lower bounded by $\Delta=\sigma^{-2}\min_{i,j\in\ccalV}A_{ij}^2.$
\end{corollary}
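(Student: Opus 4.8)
The plan is to read off all three claims directly from the diagonal-entry formula already established in Lemma~\ref{lemma:theta_entries}. Specializing \eqref{eq:Theta_entries} to the case $i=j$ gives $\Theta_{ii}=\sigma^{-2}\big(1+\sum_{k\in\textrm{CH}_i}A_{ki}^2\big)$, so the entire argument reduces to examining the nonnegative quantity $S_i:=\sum_{k\in\textrm{CH}_i}A_{ki}^2$, i.e., the sum of squared outgoing edge weights of node $i$. Everything follows from elementary facts about this sum.

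For the equivalence and the strict separation, I would argue as follows. Node $i$ is a leaf precisely when $\textrm{CH}_i=\emptyset$ (equivalently $\bba_i=\mathbf{0}_N$, as noted just before the corollary); in that case $S_i=0$ and hence $\Theta_{ii}=\sigma^{-2}$. Conversely, if $i$ is not a leaf then $\textrm{CH}_i\neq\emptyset$, so there is at least one index $k$ with $A_{ki}\neq0$, giving a strictly positive summand $A_{ki}^2>0$; since every term of $S_i$ is nonnegative, this forces $S_i>0$ and therefore $\Theta_{jj}>\sigma^{-2}$. This simultaneously yields the ``if and only if'' characterization and the claim that all non-leaf diagonal entries strictly exceed $\sigma^{-2}$.

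For the resolution limit, I would lower-bound $S_j$ for an arbitrary non-leaf $j$. Let $m^2:=\min_{A_{pq}\neq0}A_{pq}^2$ denote the smallest squared nonzero edge weight (this is how the minimum in the statement must be read, since minimizing over all index pairs would make $\Delta$ vacuously zero). Each nonzero term satisfies $A_{kj}^2\geq m^2$, and because $\textrm{CH}_j$ is nonempty we keep at least one such term, so $S_j\geq m^2$. Multiplying through by $\sigma^{-2}$ gives $\Theta_{jj}-\sigma^{-2}=\sigma^{-2}S_j\geq\sigma^{-2}m^2=\Delta$, which is exactly the stated bound.

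I do not anticipate any real technical obstacle: the result is an immediate corollary of Lemma~\ref{lemma:theta_entries}, and the only points requiring care are bookkeeping ones. The first is the interpretation of the minimum as ranging over edges (nonzero entries of $\bbA$) rather than all pairs. The second, worth flagging in the discussion rather than the proof, is that the bound is generally loose: $S_j$ aggregates \emph{all} squared outgoing weights of $j$, so a high-out-degree node sits far above $\sigma^{-2}$, whereas $\Delta$ only certifies the separation through a single worst-case edge weight.
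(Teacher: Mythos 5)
Your proof is correct and follows exactly the route the paper intends: the corollary is read off from the diagonal case of Lemma~\ref{lemma:theta_entries}, with leafness equivalent to the vanishing of the nonnegative sum $\sum_{k\in\textrm{CH}_i}A_{ki}^2$ and the gap bounded by the smallest squared nonzero edge weight. Your observation that the minimum in $\Delta$ must be taken over nonzero entries of $\bbA$ (i.e., over edges) rather than all index pairs is a fair and worthwhile clarification of the statement as written.
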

%
%then the corresponding column of $\bbA$ is identically zero, since leaves have no outgoing edges. Consequently, $(\bbA^\top \bbA)_{ij} = \sum_k A_{ki} A_{kj} = 0$ for all $j$, so both the $i$-th row and column vanish, and in particular $(\bbA^\top \bbA)_{ii} = 0$. Moreover, because $\bbA$ is strictly upper triangular, both $\bbA$ and $\bbA^\top$ have zeros along their diagonals. It follows that the diagonal entries of the precision matrix for leaf nodes equal one, i.e., $\bbTheta_{ii}=1$, making leaves directly identifiable. 
Moreover, it follows from \eqref{eq:Theta_entries} that we can also identify the parents $\textrm{PA}_i$ of leaf $i$ and the edge weights $A_{ij}\neq 0$ (recall $\sigma^2$ is known). This is because $\textrm{CH}_i=\emptyset$, hence the off-diagonal entries in the $i$-th row (and column) of $\bbTheta$ are scaled (by $-\sigma^{-2}$) copies of the $i$-th row of $\bbA$, whose support is precisely $\textrm{PA}_i$. All in all, if we are given $\bbTheta$ and identify $i\in\ccalV$ as a leaf (cf. Corolloray \ref{cor:leafs}), then we can recover the $i$-th row of the adjacency matrix as 
\begin{equation}\label{eq:leaf_adjacency_row}
    A_{ij}=\left\{\begin{array}{cc}
-\sigma^2\Theta_{ij}, & j<i\\
0,& j\geq i
\end{array}\right..
\end{equation}
Of course, we also know the $i$-th column of $\bbA$ is $\bba_i=\mathbf{0}_N$. %At the same time, the off-diagonal contributions from $-\bbA$ and $-\bbA^\top$ in $\bbTheta$ generate the rows and columns associated with leaf nodes, revealing their direct parents. 
This observation forms the key insight behind the proposed DAG recovery method, at least in the idealized setting where the ensemble precision matrix $\bbTheta$ is available. The idea is to iteratively identify leaf nodes using Corollary \ref{cor:leafs}, recover their parents from $\bbTheta$ using \eqref{eq:leaf_adjacency_row}, and then remove the contribution of each leaf (the node itself and its incident edges) using \eqref{eq:Theta_entries}. The resulting matrix describes the conditional dependence relations of the reduced DAG.  Repeating this stepwise bottom-up process enables full recovery of the underlying DAG. 

The two last steps of the aforementioned process can be related to eliminating a leaf node from the data and recomputing the precision matrix over the leftover nodes. To clarify this relation, let us split the precision (and covariance) matrix in 4 blocks (for visual convenience we permute rows and columns so that leaf $i$ is depicted first) and, a block matrix inversion identity yields 
%
%\begin{equation}\label{eq:prune}
%\bbTheta =
%\left[
%\begin{array}{cc}
%(\bbTheta)_{ii} & [\bbTheta]_{i,\ccalV\setminus\{i\}}\\
%[\bbTheta]_{\ccalV \setminus \{i\},i} &
%[\bbTheta]_{\ccalV \setminus \{i\}, \ccalV \setminus \{i\}}
%\end{array}
%\right]
%\end{equation}
%\bbTheta_{-i} 
%&= \bbTheta_{\mathcal R \mathcal R} 
%  - \Theta_{ii}^{-1}\,\bbtheta_{\mathcal R i}\,\bbtheta_{\mathcal R i}^{\top}.
\begin{align}\label{eq:prune}
\bbTheta &=
\left[
\begin{array}{c:c}
[\bbTheta]_{ii} & [\bbTheta]_{i, \ccalR} \\ \hdashline
\\
\left[\bbTheta\right]_{\ccalR, i} & \;\;{ [\bbTheta]_{\ccalR, \ccalR}}\;\; \\
\\
\end{array}
\right],\\
%\hspace{0.1cm}
\bbTheta_{\ccalR \ccalR}  
   &= [\bbTheta]_{\ccalR, \ccalR}  
     - [\bbTheta]_{ii}^{-1} \, [\bbTheta]_{\ccalR,i}\,[\bbTheta]_{i, \ccalR},\label{e:new_precision}
\end{align}
where $\ccalR=\ccalV\setminus \{i\}$ is the set of remaining nodes, $[\bbTheta]_{ii}=\Theta_{ii}=\sigma^{-2}$ is the scalar diagonal entry corresponding to the selected leaf $i$,  
$[\bbTheta]_{i, \ccalR} \in \reals ^{1 \times|\ccalR|}$ is the row vector with entries $-\sigma^{-2}A_{ij}$, $j\in\ccalR$, %  $\bbtheta_{\ccalR i} = \bbtheta_{i\ccalR}$ is its transpose,  
and $[\bbTheta]_{\ccalR \ccalR} \in \reals^{|\ccalR|\times|\ccalR|}$ is the precision submatrix over the leftover nodes (prior to pruning). After removing $i$, the precision matrix for the next step is 
$\bbTheta_{\ccalR \ccalR}   = [\bbTheta]_{\ccalR, \ccalR}   - [\bbTheta]_{ii}^{-1} \, [\bbTheta]_{\ccalR,i}\,[\bbTheta]_{i, \ccalR}$. If $i$ is indeed a leaf node, the elimination of the $i$-th row and column and the update of the remaining entries via \eqref{e:new_precision} is similar to the process described in the previous paragraph.  %Assuming $\sigma = 1$, then for the leaf $i$, $\theta_{ii} = 1$ and the updated subprecision matrix for remaining nodes $\ccalR$ is given by $\tilde{\bbTheta}_{\ccalR \ccalR} = \bbTheta_{\ccalR \ccalR} - \, \bbtheta_{\ccalR i}\,\bbtheta_{i\ccalR}$.

\begin{algorithm}[t]
\caption{\textsc{BUILD}: Bottom-Up Inference of Linear DAGs}
\label{alg:builder}
\begin{algorithmic}[1]
\Require $\hbTheta$, $\bbX$, $\sigma^2$, %empirical covariance $\widehat{\bbSigma}$, hyperparameters $(k,\tau)$, 
threshold $\bbvarepsilon$, refresh rate $\rho\in[0,1]$
\Ensure $\widehat{\bbA}\in\reals^{N\times N}$
\State $\bbA\gets \bb0_{N\times N}$, $\tau\gets 0$,\: $\bbTheta\gets \hbTheta$,\; $\mathcal{Q}\gets\emptyset$ \Comment{pruned leaves in $\mathcal{Q}$}
\State $\ccalT\gets \{\lfloor t\rho N\rfloor: t=1,2,\ldots,N-1\}
%\cap[N-1]
$ \Comment{refresh checkpoints}
%\State $r\gets 0$
\While{$|\mathcal{Q}|<N$}
    \State $\mathcal{R}\gets \{1,\ldots,N\}\setminus\mathcal{Q}$ \Comment{remaining nodes}
  \If{$\tau\in\ccalT$} \Comment{time to re-estimate $\bbTheta_{\ccalR\ccalR}$}
%    \If{$|\mathcal{R}|>1$}
      \State $\hbTheta_{\ccalR\ccalR}\gets %\textsc{GreedyPrune}(\bbX_{\mathcal{S}\!,:},\,\widehat{\bbSigma}_{\mathcal{S},\mathcal{S}},\,k,\,\tau)
      \textsc{GreedyPrune}(\bbX_{\mathcal{R}})$ %\Comment{re-estimate $\bbTheta$ on $\ccalR$}
      \State $[\bbTheta]_{\ccalR,\ccalR}\gets\hbTheta_{\ccalR\ccalR}$ %\red{AGM: This notation is confusing, in the body of the paper, we use $\bbTheta_{\ccalR\ccalR}$ to denote a block of the precision matrix, here we are using it as the new precision matrix (denoted in the body of the paper as $\bbTheta_{ii}$)}
%    \EndIf
  \EndIf
  \State $i\gets \arg\min_{u\notin\mathcal{Q}}\{\Theta_{uu}\ge \bbvarepsilon\}$ \textbf{ break if none} \Comment{find leaf}
  \State $\mathcal{Q}\gets \mathcal{Q}\cup\{i\}$,\; $\tau\gets \tau+1$
  \State $\bba\gets -\sigma^2\bbTheta_{i,:}$;\; set $a_i\gets 0$;\; $\forall j\neq i,$ set $a_j\gets 0$ if $|a_j|<\bbvarepsilon$ 
  \State $\bbA_{i, :}\gets \bba$ \Comment{connectivity of $i$ to its parents [cf. \eqref{eq:leaf_adjacency_row}]}
  \For{each $j$ with $|a_j|\ge \bbvarepsilon$}
    \State $\bbTheta_{j,:}\gets \bbTheta_{j,:}-\sigma^{-2} a_j\bba$ \Comment{effect of $i$ on $\ccalR$ [cf. \eqref{e:new_precision}]}
  \EndFor
  \State $\bbTheta_{i,:}\gets \bb0$,\; $\bbTheta_{:\!,i}\gets \bb0$ \Comment{prune leaf node $i$}
\EndWhile
\State \Return $\widehat{\bbA}=\bbA$
\end{algorithmic}\label{alg:BUILD}
\end{algorithm}

Building on this core idea, we now introduce our proposed approach, \textsc{BUILD} (Bottom-Up Inference of Linear DAGs), which operates in two phases. Given data $\bbX \in \reals^{N \times M}$, in the first phase we estimate the precision matrix borrowing existing methods from the literature. In the second phase (our main contribution), we apply Algorithm~\ref{alg:BUILD} to recover the underlying DAG structure from $\hbTheta$. \vspace{2pt}

\noindent\textbf{Implementation details}. With access to the ensemble precision matrix $\bbTheta$, \textsc{BUILD} deterministically recovers the adjacency matrix $\bbA$ in $\ccalO(N^2)$ time. In practice, however, the population-level precision matrix is unknown and it must be inferred from observational data $\bbX$. %As discussed in Section~\ref{sec:problem}, precision matrix estimation is itself a nontrivial task and inevitably introduces error, which can degrade the performance of the algorithm. 
%
%This formulation implies the covariance and precision matrices of $\bbx$ are given by
%
%\begin{align}
%\bbSigma_{\bbx} 
%&= (\bbI - \bbA)^{-1} \,\bbSigma_{\bbvarepsilon}\, (\bbI - %\bbA^\top)^{-1}, \\
%&= \sigma^2 (\bbI - \bbA)^{-1} (\bbI - \bbA^\top)^{-1}, \\
%\bbTheta_{\bbx} = \bbSigma^{-1}_{\bbx}
% &= \tfrac{1}{\sigma^2} \blue{(\bbI - \bbA)(\bbI - \bbA^\top) \
%&= \tfrac{1}{\sigma^2} (\bbI - \bbA^\top)(\bbI - %\bbA).\label{eq:eqexp}
%\end{align}
%
%A key object in Gaussian graphical modeling is the precision matrix $\bbTheta=\bbSigma^{-1}$ given in~\eqref{eq:eqexp}, whose sparsity pattern encodes conditional independencies, with $\bbTheta_{ij}=0$ if and only if $X_i \perp X_j \mid X_{\setminus\{i,j\}}$. However, $\bbTheta$ corresponds to the moralized graph of the underlying DAG~\cite{loh2014} and thus does not uniquely determine edge orientations, while estimating $\bbTheta$ itself is already a challenging task. 
This can be particularly challenging for high-dimensional linear SEMs with equal noise variance $\sigma^2$, as the covariance matrix $\bbSigma=\sigma^2(\bbI_N-\bbA)^{-1}(\bbI_N-\bbA^\top)^{-1}$ (and hence $\bbTheta$) will be badly conditioned. Indeed, the Neumann expansion $(\bbI_N-\bbA)^{-1}=\sum_{k=0}^{N-1} \bbA^k$ reveals that variances at downstream nodes in $\ccalD$ accumulate noise contributions from all their ancestors. This noise accumulation effect leads to large disparities in the marginal variances of $x_1,\ldots,x_N$, and thus in the eigenvalues of $\bbSigma$. The result is an ill-conditioned precision matrix, a phenomenon particularly severe in DAGs with long dependency chains, high branching, or heterogeneous edge weights. For instance, in a chain $1\to 2\to\cdots\to N$ with edge weights $A_{i,i+1}=k>1$, the condition number of $\bbTheta$ grows as $k^{2N}$. %Such ill-conditioning propagates estimation errors, causing standard approaches like Graphical Lasso to misidentify weak dependencies and yield spurious edges. 
We employ GreedyPrune~\cite{kelner2020greedy} to estimate the precision matrix in the initial phase, which we found to perform best in ill-conditioned settings where e.g., graphical Lasso fails miserably.

We now provide a more detailed algorithmic description of BUILD. We examine how the finite sample-induced errors in $\hbTheta$ affect the estimation of $\bbA$, and describe our mitigation strategies. At each iteration, we restrict attention to the active set of variables $\ccalR$. We designate as a leaf the node $i\in\ccalR$ whose diagonal entry in the precision matrix attains the smallest value above a fixed tolerance $\epsilon$, thereby avoiding spurious numerical artifacts leading to leaf misidentification. We then recover its parent relationships from the corresponding row of the precision matrix ($\bbTheta_{i,:}$ using Matlab notation), discarding entries with negligible magnitude to suppress the effect of estimation noise. The precision matrix is then updated by removing the contribution of this node (leaf pruning via block matrix inversion described earlier), and the process continues on the reduced system, which has the same structure as before. Repeating this procedure until no nodes remain yields $\hbA$. 

As mentioned earlier, inaccurately estimated edges may fail to completely remove the influence of discovered leaf nodes, leaving residual effects in the updated precision matrix. For large DAGs, these residuals propagate further, compounding over iterations and leading to a snowball effect that may result in catastrophic edge detection errors. To mitigate this phenomenon, we introduce a refreshing scheme in which the precision matrix is re-estimated at fixed intervals. This allows BUILD to operate on an updated precision matrix (with fewer variables as leafs are pruned, thus easier to estimate), resetting past accumulated errors. Algorithm \ref{alg:BUILD} considers only the active nodes $\ccalR$ that have not yet been processed, extracts the corresponding rows and columns of the sample covariance matrix, and re-estimates the precision matrix using GreedyPrune as if operating on a newly pruned graph. We find that this scheme markedly mitigates the effects of error accumulation. In the experiments, we test several variants of \textsc{BUILD} with different refresh rates to provide further insight into the performance versus complexity tradeoffs.\vspace{2pt}

% \blue{[Hamed: I'm happy until here. Will take care of the rest tomorrow morning. VAMOS!]}

%Also, algorithm~\ref{alg:BUILD} considers an optional refreshment of the precision matrix during the peeling process. Since each removal step introduces small approximation errors, these can propagate and accumulate over multiple iterations. The refresh mechanism addresses this issue by periodically re-estimating the precision matrix on the active set of remaining variables. This correction step reduces the effect of error propagation and enhances robustness, which is especially important in ill-conditioned regimes where naive iterative removal would otherwise degrade performance.
%\input{Subplot}

\begin{figure*}
    \centering
    \includegraphics[width=0.83\linewidth]{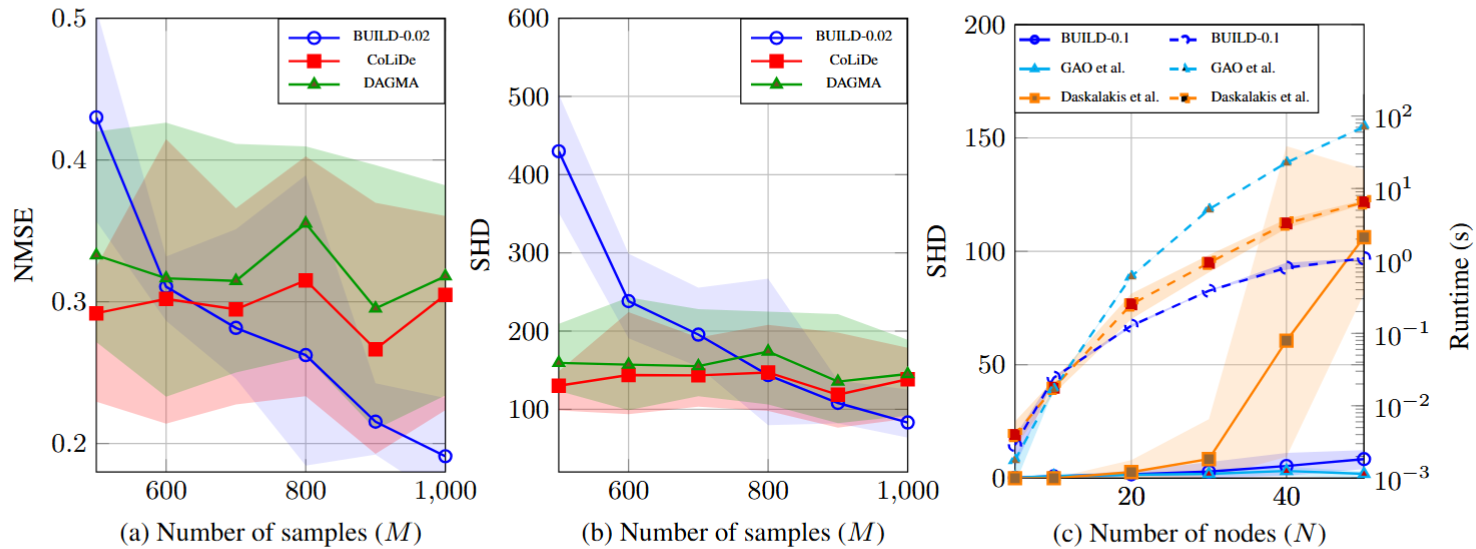}
\caption{Performance of BUILD compared with state-of-the-art baselines. Results are averaged over 10 trials, with shaded regions indicating the 10th and 90th percentiles. 
    (a) NMSE of edge estimation as a function of sample size. 
    (b) SHD as a function of sample size. 
    (c) Comparison with order-based methods: SHD (left) and runtime in log scale (right) versus the number of nodes. \vspace{-.6cm}}
  \label{fig:one-by-three}
\end{figure*}

\noindent\textbf{Closing remarks.}
The main bottleneck of our approach is its sensitivity to errors in $\hbTheta$. The quality of $\hbTheta$ directly
affects \textsc{BUILD}'s ability to recover $\bbA$. %, which explains the need for larger sample sizes. 
With more data, the first phase yields a more reliable precision matrix, and downstream DAG estimation improves accordingly. Also, re-estimation of $\hbTheta$ offers added robustness at the price of increased runtime. We examine these tensions empirically in the numerical tests that follow.

\pgfplotstableread[col sep=comma]{data/TABLE_metrics.csv}\results

% mean ± std (normal)
\newcommand{\meanstd}[3]{%
  \pgfplotstablegetelem{#1}{#2}\of{\results}\edef\themean{\pgfplotsretval}%
  \pgfplotstablegetelem{#1}{#3}\of{\results}\edef\thesd{\pgfplotsretval}%
  \(\themean \pm \thesd\)%
}

% mean ± std (bold numbers)
\newcommand{\meanstdb}[3]{%
  \pgfplotstablegetelem{#1}{#2}\of{\results}\edef\themean{\pgfplotsretval}%
  \pgfplotstablegetelem{#1}{#3}\of{\results}\edef\thesd{\pgfplotsretval}%
  \(\mathbf{\themean} \pm \mathbf{\thesd}\)%
}

\begin{table}[t]
\centering
\caption{Comparison of continuous optimization methods with variants of BUILD under different refreshing rates. Metrics reported as mean $\pm$ std over 20 trials. Best entries are highlighted in \textbf{bold}.}
\label{tab:main_comparison}
\setlength{\tabcolsep}{3pt}
\renewcommand{\arraystretch}{0.95}
\footnotesize
\resizebox{\columnwidth}{!}{%
\begin{tabular}{lcccc}
\toprule
Baseline & SHD $\downarrow$ & FDR $\downarrow$ & TPR $\uparrow$ & Time (s) $\downarrow$ \\
\midrule
BUILD\texttt{-}0.005 &
  \meanstdb{0}{shd_mean}{shd_std} &
  \meanstdb{0}{fdr_mean}{fdr_std} &
  \meanstdb{0}{tpr_mean}{tpr_std} &
  \meanstd {0}{runtime_mean}{runtime_std} \\
BUILD\texttt{-}0.01  &
  \meanstd{1}{shd_mean}{shd_std} &
  \meanstd{1}{fdr_mean}{fdr_std} &
  \meanstd{1}{tpr_mean}{tpr_std} &
  \meanstd{1}{runtime_mean}{runtime_std} \\
BUILD\texttt{-}0.02  &
  \meanstd{2}{shd_mean}{shd_std} &
  \meanstd{2}{fdr_mean}{fdr_std} &
  \meanstd{2}{tpr_mean}{tpr_std} &
  \meanstd{2}{runtime_mean}{runtime_std} \\
BUILD\texttt{-}0.04  &
  \meanstd{3}{shd_mean}{shd_std} &
  \meanstd{3}{fdr_mean}{fdr_std} &
  \meanstd{3}{tpr_mean}{tpr_std} &
  \meanstd{3}{runtime_mean}{runtime_std} \\
\hline
CoLiDE &
  \meanstd{4}{shd_mean}{shd_std} &
  \meanstd{4}{fdr_mean}{fdr_std} &
  \meanstd{4}{tpr_mean}{tpr_std} &
  \meanstd{4}{runtime_mean}{runtime_std} \\
DAGMA                &
  \meanstd{5}{shd_mean}{shd_std} &
  \meanstd{5}{fdr_mean}{fdr_std} &
  \meanstd{5}{tpr_mean}{tpr_std} &
  \meanstdb{5}{runtime_mean}{runtime_std} \\
\bottomrule
\end{tabular}%
}
\end{table}

\section{Numerical Tests}
\label{sec:Experiments}
We evaluate the performance of BUILD\footnote{Code at \url{https://github.com/hamedajorlou/BUILD}}  by benchmarking it against recent state-of-the-art DAG structure learning algorithms, including CoLiDE~\cite{saboksayr2024colide}, DAGMA~\cite{bello2022dagma}, Gao et al.~\cite{gao2022optimal}, and Daskalakis et al.~\cite{daskalakis2025}. Unless otherwise stated, the synthetic data are generated according to a linear SEM with homoscedastic Gaussian noise with $\sigma^2=1$. For the experiments, we consider Erdős–Rényi DAGs with $N=200$ nodes and expected degree $d=4$. The sample size is $M=1{,}000$ observations. Edge weights $A_{ij}$ are drawn uniformly from the range $(-2,-0.5)\cup(0.5,2)$. This is a standard and challenging setting in the DAG estimation literature~\cite{saboksayr2024colide,bello2022dagma}. %In principle, purely positive ranges can also be used, in which case the algorithm maintains strong performance. 
The task is to recover %the causal relationships encoded in 
the ground-truth DAG, and performance is evaluated in terms of structural Hamming distance (SHD), true and false positive rates (TPR and FDR), and runtime. Comparative results are reported in Table~\ref{tab:main_comparison}. Mean $\pm$ standard deviation over 20 trials is reported.
\vspace{2pt}

\noindent\textbf{Discussion and findings.} The variants of BUILD reported in Table~\ref{tab:main_comparison} correspond to different precision matrix refresh rates. For instance, BUILD-$0.1$ means $\bbTheta$ is re-estimated whenever 10\% of the nodes have been pruned during the bottom-up DAG construction process.

In principle, BUILD can exactly recover the true underlying DAG if provided with the ensemble precision matrix. However, in practice the precision matrix is only available through empirical estimation, which inevitably introduces errors. As BUILD proceeds by sequentially removing the influence of leaf nodes, these estimation errors propagate and accumulate, ultimately leading to spurious edge detections.
The key insight is that BUILD’s accuracy is highly sensitive to precision matrix estimation error. Refreshing the precision matrix mitigates this issue by discarding the accumulated error and recalibrating the algorithm with a better estimate, thereby improving edge recovery. This effect is most evident in the FDR
% \srcomment{Careful, you already defined FDR, no need to define it twice}
reported in Table~\ref{tab:main_comparison}. As the refresh rate increases, the FDR consistently decreases. Naturally, each refresh incurs additional computational cost, creating a trade-off between accuracy and runtime. As an extreme case for a DAG with $N=200$ nodes, we run BUILD-$0.005$, which refreshes the precision matrix after processing each node. While admittedly computationally expensive, this variant achieves substantially higher accuracy than competing continuous optimization methods, highlighting the merits of error correction through re-estimation of $\hbTheta$.\vspace{2pt}

\noindent\textbf{Edge weight estimation.} To assess edge-weight estimation performance not captured by the metrics in Table~\ref{tab:main_comparison}, we compute the Normalized MSE (NMSE) between the estimated adjacency matrix $\widehat{\bbA}$ and the ground truth $\bbA^\star$, defined as $\mathrm{NMSE} = \|\widehat{\bbA} - \bbA^\star\|_F^2 / \|\bbA^\star\|_F^2$, across different sample sizes; see Fig.~\ref{fig:one-by-three}(a). %This experiment is designed to evaluate how well the models capture the true edges, so that the influence governed by each parent can be decoded. 
In BUILD-$0.02$, the precision matrix is refreshed every $2\%$ of the nodes processed, which for $N=200$ nodes amounts to every $4$ nodes. As $M$ increases, the precision matrix estimation error decreases, leading to more accurate edge recovery in the second phase of the algorithm. This improved accuracy allows BUILD-$0.02$ to outperform the other two baselines, at comparable complexity (see Table~\ref{tab:main_comparison}).\vspace{2pt}

%The result is shown in Fig.~\ref{fig:one-by-three}(a) By increasing the number of samples, obtained precision matrix gets more accurate and hence the edge estimation error decreases. In this test case BUILD-0.02 is plotted which refreshes the precision matrix after processing 4 nodes. The performance can be improved if the refreshments occur more often.
%\srcomment{We need more details here. Start explaining the goal of the experiment: assessing how the number of samples impact the performance.
%Then, describe also the oerformance of BUILD in comparisson with the other methods. When number of samples is small, we do a poor job in estimating the precision and performance degrade. in contrast, with enough samples, the quality of the estimated precision improves and our method outperform the alternatives.}

\noindent\textbf{Structure recovery.} In many applications, the primary objective is to recover the correct support of $\bbA$. With this in mind, in Fig.~\ref{fig:one-by-three}(b), we report the SHD of the estimated DAGs as a function of the number of samples $M$. As the sample size grows, the error in precision matrix estimation decreases, which leads to more reliable support recovery. In this setting, BUILD-$0.02$ achieves the lowest SHD and outperforms the other models once $800$ samples are available.\vspace{2pt}

\noindent\textbf{Comparison with order-based methods.} The previous experiments compared BUILD against continuous optimization methods. We now turn our attention to a different line of order-based approaches with promising performance, namely Gao et al.~\cite{gao2022optimal} and Daskalakis et al.~\cite{daskalakis2025}. Unlike optimization-based methods, these algorithms focus on support recovery rather than estimating edge weights, which is an inherent limitation. For this reason, we present their results separately in Fig.~\ref{fig:one-by-three}(c), where the left panel shows SHD and the right panel reports runtime (in log-scale). Dashed lines correspond to the runtime of each baseline. %In this setting, the edge weights on the induced DAG are drawn from $(-2,-0.5)\cup(0.5,2)$, consistent with the experimental setup reported in their work. 
We observe that Gao et al.~\cite{gao2022optimal} achieves consistently low error as the number of nodes grows, but its runtime grows rapidly, making it impractical for larger networks. By contrast, Daskalakis et al.~\cite{daskalakis2025} is more efficient but suffers from high error in this edge weight regime, even though it nearly recovers the ground truth when weights are restricted to $(-1,-0.5)\cup(0.5,1)$. In terms of SHD, BUILD-0.1 matches the accuracy of Gao et al.~\cite{gao2022optimal} while remaining scalable, and it is consistently faster than both baselines. This makes BUILD a viable choice for large-scale DAG learning, where the alternatives either become computationally prohibitive or incur significant error.

% \noindent \textbf{Takeaway.} \textsc{BUILD}’s accuracy is driven by the quality of the precision estimate, and refreshing curbs error propagation at a controllable compute cost. Moderate refresh schedules typically deliver the best FDR and SHD for practical runtimes, while aggressive schedules buy extra accuracy in cost of higher runtime.
% \srcomment{I think it is more natural to have each takeaway within the discussion of each experiment or at their end.}

\section{Conclusions, Limitations, and Future Work}\label{sec:conclusions}

We introduced \textsc{BUILD}, a deterministic algorithm for DAG structure identification using the observations' precision matrix. BUILD iteratively identifies and prunes leave nodes, reconstructing the DAG in a bottom-up fashion. To keep finite sample-induced error propagation in check, optional re-estimation of the precision matrix is considered. On synthetic linear Gaussian SEM data, BUILD achieved lower SHD and higher TPR than representative baselines while remaining computationally competitive. A limitation of the algorithm is its sensitivity to the accuracy of the estimated precision matrix, especially given that the problems tend to be badly conditioned. %When estimation error is large, performance degrades noticeably. 
Moreover, refreshing the weights discards many previously computed entries of the precision matrix that are already contaminated by error, which introduces inefficiency. These limitations point toward natural directions for future research, including concomitant estimation of the noise variance and DAG structure, formalizing sample-complexity guarantees while neglecting the refresh mechanism for analytical tractability, designing adaptive refresh schedules to balance accuracy and efficiency, and extending the approach to non-Gaussian and nonlinear SEMs. Exploring structure estimation in the presence of interventional data is also of interest.

% \srcomment{We need to unify the references. For journal abbreviation we can use the standard IEEE abbreviation defined in myIEEEabrv file}

\bibliographystyle{IEEEbib}
\bibliography{myIEEEabrv,strings,refs}

\end{document}